\documentclass{article}
\usepackage{algpseudocode}

\usepackage{microtype}
\usepackage{graphicx}
\usepackage{subcaption}
\usepackage{booktabs} 

\usepackage{hyperref}

\usepackage[preprint]{icml2026}

\usepackage{amsmath}
\usepackage{amssymb}
\usepackage{mathtools}
\usepackage{amsthm}

\usepackage{bbm}
\usepackage[normalem]{ulem}

\usepackage[capitalize,noabbrev]{cleveref}

\usepackage{tikz} 
\usetikzlibrary{decorations.text}
\usetikzlibrary{arrows.meta, bending}
\usetikzlibrary{positioning}
\usetikzlibrary{bayesnet}
\usetikzlibrary{fit}
\usetikzlibrary{shapes,arrows}
\usetikzlibrary{shapes.multipart}
\usetikzlibrary{fit}

\theoremstyle{plain}
\newtheorem{theorem}{Theorem}[section]

\newtheorem{lemma}[theorem]{Lemma}

\theoremstyle{definition}
\newtheorem{definition}[theorem]{Definition}
\newtheorem{assumption}[theorem]{Assumption}
\theoremstyle{remark}

\newcommand{\T}{\mathbf{T}}
\newcommand{\X}{\mathbf{X}}
\newcommand{\Y}{\mathbf{Y}}
\newcommand{\x}{\mathbf{x}}

\newcommand{\z}{\mathbf{z}}
\newcommand{\pa}{\mathrm{pa}}
\newcommand{\ch}{\mathrm{ch}}

\newcommand{\cZ}{\mathcal{Z}}

\newcommand{\cE}{\mathcal{E}}
\newcommand{\cF}{\mathcal{F}}

\newcommand{\cD}{\mathcal{D}}

\newcommand{\cH}{\mathcal{H}}

\newcommand{\cG}{\mathcal{G}}

\newcommand{\Z}{\mathbb{Z}}

\newcommand{\cB}{\mathcal{B}}

\newcommand{\cV}{\mathcal{V}}

\newcommand{\cX}{\mathcal{X}}

\renewcommand{\Z}{\mathbf{Z}}

\usepackage[textsize=tiny]{todonotes}

\icmltitlerunning{Structural Causal Bottleneck Models}

\begin{document}

\twocolumn[
  \icmltitle{Structural Causal Bottleneck Models}



  \icmlsetsymbol{equal}{*}

  \begin{icmlauthorlist}
    \icmlauthor{Simon Bing}{equal,xxx,yyy}
    \icmlauthor{Jonas Wahl}{equal,zzz}
    \icmlauthor{Jakob Runge}{yyy}
  \end{icmlauthorlist}

  \icmlaffiliation{xxx}{Technische Universität Berlin}
  \icmlaffiliation{yyy}{Department of Computer Science, University of Potsdam}
  \icmlaffiliation{zzz}{German Research Center for Artificial Intelligence (DFKI)}

  \icmlcorrespondingauthor{Simon Bing}{bing@campus.tu-berlin.de}

  \icmlkeywords{Causality, Causal Inference, Representation Learning, Causal Abstractions, Causal Effect Estimation, Dimension Reduction, Information Bottleneck}

  \vskip 0.3in
]



\printAffiliationsAndNotice{\icmlEqualContribution}

\begin{abstract}
  We introduce structural causal bottleneck models (SCBMs), a novel class of structural causal models. At the core of SCBMs lies the assumption that causal effects between high-dimensional variables only depend on low-dimensional summary statistics, or \emph{bottlenecks}, of the causes. SCBMs provide a flexible framework for task-specific dimension reduction while being estimable via standard, simple learning algorithms in practice. We analyse identifiability in SCBMs, connect them to information bottlenecks in the sense of \citet{tishby2015deep}, and illustrate how to estimate them experimentally. We also demonstrate the benefit of bottlenecks for effect estimation in low-sample transfer learning settings.  We argue that SCBMs provide an alternative to existing causal dimension reduction frameworks like causal representation learning or causal abstraction learning. 
\end{abstract}

\section{Introduction}\label{sec:intro}

A fundamental aim of scientific inquiry is to uncover and quantify causal relationships among complex phenomena, which often span large spaces, long times, or many individuals. For example, neuroscientists study how neuron clusters respond to tasks \citep{aoi2018model}, while climate scientists examine interactions like the El Niño Southern Oscillation, affecting global weather patterns \citep{timmermann2018nino}. These phenomena are modeled as high-dimensional random vectors that are consequently simplified, abstracted or transformed.

A popular type of model to formalize causal interactions of random variables is the structural causal model (SCM) \citep{pearl2009causality}. An SCM consists of structural equations
$\X_j := m_j(\X_{i_1},\dots,\X_{i_k},  \boldsymbol{\eta}_j  ),$
one for each quantity of interest, that describe how each variable is brought about by its \emph{causal parents} $\X_{i_1},\dots,\X_{i_k}$ and an exogeneous noise term $\boldsymbol{\eta}_j$ through the mechanism function $m_j$. Although in most applications of SCMs, the variables $\X_j$ and noise-terms $\boldsymbol{\eta}_j$ are assumed to be one-dimensional, the SCM setup does not necessitate this assumption. However, modeling interactions of high-dimensional vectors $\X_j$ may quickly become infeasible in practice without additional assumptions. Even if the mechanism functions $m_j$ are assumed to be linear and additive, i.e.
$\X_j := \sum_{i \in \pa(j)}\mathbf{A}^j_{i} \X_i +  \boldsymbol{\eta}_j,$
with matrices $\mathbf{A}^j_i$, the associated regression tasks require large sample sizes and/or a sufficient degree of regularization to yield reliable outcomes in high dimensions. When estimating causal effects in an SCM, the curse of dimensionality is particularly daunting. The estimation may not only require as input the treatment and outcome variables, but also confounding covariates to remove spurious correlations, increasing the dimension of the input space further.

In this work, we investigate SCMs of high-dimensional random vectors in which the causal variables only depend on their parents through low-dimensional sufficient statistics or \emph{bottlenecks}. That is to say, we assume that for any $\X_j$ and any of its parents $\X_i$ there exists a deterministic bottleneck function $b_{i}^j$ that maps $\X_i$ to a lower-dimensional variable $\Z_{i}^j = b_{i}^j(\X_i)$ such that
\begin{align}  \label{eq.3}
    \X_j := f_j(\Z_{i_1}^i,\dots,\Z_{i_k}^i,  \boldsymbol{\eta}_j  )
\end{align}
depends on its parents only through their bottlenecks. For linear mechanisms, this requirement translates to the assumption that the matrices $\mathbf{A}^j_i$ are of low rank compared to the dimensions of $\X_i$ and $\X_j$.

The bottleneck assumption is reasonable when modeling causal interactions between high-dimensional phenomena, in which a causal child does not depend on all information encoded in its parents, but on emergent properties—captured for instance by a weighted average or specific system states. To model rainfall patterns over West Africa it may be sufficient to include information on whether El Niño Southern Oscillation (ENSO) is in an El Niño or a La Niña phase rather than modeling the full temperature distribution over the Pacific Ocean. 
Dimension reduction assumptions have been studied in spatially-aggregated vector autoregressive models of climate teleconnections~\citep{tibau2022spatiotemporal}. 
However, reducing dimensions before estimating causal effects can discard or misidentify important information \citep{wahl2024foundations, ninad2025causal}. Further, different children may rely on different aspects of a parent variable. East Asian and South American rainfall patterns may respond to different ENSO region anomalies, calling for \emph{target-dependent} dimension reduction. While \emph{sufficient dimension reduction} has a long history \citep{izenman1975reduced}, most approaches focus on linear models with single treatment-outcome pairs \citep{globerson2003sufficient, li2007sparse, li2018sufficient}. In addition, with the notable exception of \citep{aoi2018model}, the outcome vector is typically one-dimensional and only the input may be of high-dimensionality.

We formally introduce structural causal bottleneck models (SCBMs), a class of graphical causal models that address the aforementioned shortcomings and provide a flexible framework for targeted dimension reduction for causal effects. We discuss special cases of SCBMs that are not covered in the standard causal inference literature, provide an identifiability result showing the degree to which we can learn the bottleneck variables from data, and establish a connection between SCBMs and the Information Bottleneck method of \citep{tishby2000information}. Finally, we provide experimental evidence to support our theoretical identifiability results, and highlight the benefits of SCBMs in a transfer learning setting where joint observations of all variables are rare.

\section{Preliminaries and Definitions}\label{sec:def}

Throughout this work, we fix the following notation. $\cG = (\mathcal{V},\mathcal{E})$ will refer to a directed acyclic graph (DAG) with node set $\cV$ and edge set $\cE$. The set of parents of a node $i \in \cV$ will be denoted by $\mathrm{pa}(i)$, the set of its children by $\mathrm{ch}(i)$. The set $\mathcal{V}_{ex}$ is the set of exogeneous (i.e. parentless) nodes, while $\mathcal{V}_{end} = \cV \backslash \cV_{ex}$ is the set of exogeneous nodes.
We reserve the letter $d$ to refer to dimensions of vector spaces, e.g. $\mathbb R^d$. We will use the convention that $d=0$ refers to discrete spaces. Unless specified otherwise, we use the word space in the sense of measurable space, i.e. a set endowed with a $\sigma$-algebra. All maps between such spaces are assumed to be measurable.

We recall that a \emph{structural causal model (SCM)} is a tuple $\mathfrak{A} = \langle \cG, \cX, \cH, \boldsymbol{\eta}, \mathcal{M}, \X \rangle$ consisting of

\begin{itemize}
    \item a DAG $\cG = (\mathcal{V},\mathcal{E})$, a family of \emph{node spaces} $\mathcal{X} = (\mathcal{X}_i)_{i \in \mathcal{V}}$, 
and a family of \emph{noise spaces} $\cH = (\mathcal{H}_{i})_{i \in \mathcal{V}}$. 
    \item a family of mutually independent noise terms $\boldsymbol{\eta} = (\boldsymbol{\eta}_i)_{i \in \mathcal{V}}$, where $\boldsymbol{\eta}_i$ takes values in $\mathcal{H}_i$.
    \item a family $ \mathcal{M} = (m_{j})_{j \in \mathcal{V}_{end}} $ of mechanism functions $m_{j}: \left(  \prod_{i \in \mathrm{pa}(j)} \mathcal{X}_i \right) \times \cH_j \to \mathcal{X}_j$;
    \item a family of random vectors $\X = (\X_i)_{i \in \mathcal{V}}$ that solves the structural assignments
        \begin{align*}
            \X_j &:= \boldsymbol{\eta}_j, \qquad \qquad &j \in \mathcal{V}_{ex}, \\
            \X_j &:= m_j\left( (\X_i)_{i \in \mathrm{pa}(j)},\boldsymbol{\eta}_j\right), \qquad \qquad &j \in \mathcal{V}_{end}.
        \end{align*}
\end{itemize}

We first define structural causal bottleneck models in full generality. Afterwards, we will add additional assumptions that narrow down the model class

\begin{definition} \label{def.structural_bottleneck}
A \emph{structural causal bottleneck model (SCBM)} is a tuple $\mathfrak{C} = \langle \cG, \cX, \cZ, \cH,\boldsymbol{\eta}, \mathcal{B},\mathcal{F}, \X \rangle$ consisting of

\begin{itemize}
    \item a DAG $\cG = (\mathcal{V},\mathcal{E})$, a family of \emph{node spaces} $\mathcal{X} = (\mathcal{X}_i)_{i \in \mathcal{V}}$, and a family of \emph{noise spaces} $\cH = (\mathcal{H}_{i})_{i \in \mathcal{V}}$. 
    \item a family of mutually independent noise vectors $\boldsymbol{\eta} = (\boldsymbol{\eta}_i)_{i \in \mathcal{V}}$. We assume that each $\boldsymbol{\eta}_i$ takes values in $\mathcal{H}_i$. 

    \item a family of \emph{bottleneck spaces} $\cZ = (\mathcal{Z}_{j})_{j \in \mathcal{V}_{end}}$. 
 
    \item a family $ \mathcal{B} = (b_{j})_{j \in \cV_{end}} $ of surjective \emph{bottleneck functions} $b_{j}: \prod_{i \in \mathrm{pa}(j)} \mathcal{X}_i  \to \mathcal{Z}_{j}$.

     \item a family $ \mathcal{F} = (f_{j})_{j \in \mathcal{V}_{end}} $ of \emph{effect functions} $f_{j}: \mathcal{Z}_{j} \times \cH_j \to \mathcal{X}_j$.
    \item a family of random vectors $\X = (\X_i)_{i \in \mathcal{V}}$ with joint distribution $P_{\X}$ that solves the structural assignments
        \begin{align*}
            \X_j &:= \boldsymbol{\eta}_j, \qquad &j \in \mathcal{V}_{ex}, \\
            \X_j &:= f_j\left( b_{j}\left((\X_i)_{i \in \mathrm{pa}(j)}\right),\boldsymbol{\eta}_j\right), \qquad &j \in \mathcal{V}_{end}.
        \end{align*}

\end{itemize}

\end{definition}

Any SCBM  $\mathfrak{C} = \langle \cG, \cX, \cZ, \cH,\boldsymbol{\eta}, \mathcal{B},\mathcal{F}, \X \rangle$ straightforwardly induces an SCM $\mathfrak{A}(\mathfrak C)  = \langle \cG, \cX, \boldsymbol{\eta}, \mathcal{M}, \X \rangle$ in the classical sense by defining the mechanism functions as $m_j\left( (\X_i)_{i \in \mathrm{pa}(j)},\boldsymbol{\eta}_j\right) := f_j\left( b_{j}\left((\X_i)_{i \in \mathrm{pa}(j)}\right),\boldsymbol{\eta}_j\right)$. We call this SCM the \emph{induced SCM}.

\begin{definition} \label{def.structural_equivalence}
    We call two SCBMs \emph{structurally equivalent} if their induced SCMs coincide.
\end{definition}

Since all interventional distributions and the observational distribution of $\X$ are fully determined by the induced SCM, structural equivalent SCBMs share the same interventional distributions (interventional equivalence) and observational distribution (observational equivalence).

To render SCBMs more practical, we need additional assumptions that reduce their degrees of freedom.

In Definition \ref{def.structural_bottleneck}, the parents of any endogeneous node $j$ are allowed to mix arbitrarily in the bottleneck space $\cZ_j$. A reasonable assumption to impose on such models is that the bottleneck can be subdivided into separate bottlenecks for each parent.

\begin{assumption} \label{ass.factored}
\begin{itemize}
    \item [(a)] Each bottleneck is \emph{factored} in the sense that $\cZ_j$ and $b_j$ can be decomposed as $\cZ_j = \prod_{i \in \pa(j)} \cZ_{(i,j)}$ and
\begin{align*}
     b_j: \prod_{i \in \pa(j)} \cX_{i} &\to \prod_{i \in \pa(j)} \cZ_{(i,j)} \\  b_j((\x_i)_{i \in \pa(j)} ) &= (b_{(i,j)}(\x_i))_{i \in \pa(j)} 
\end{align*}
with $\dim \cZ_{(i,j)} \leq d_i$.
In other words, we assume that there is a separate bottleneck space for every parent of the endogeneous node $j$. 
\item[(b)] Each $\cX_j$ is a vector space, and coincides with the noise space $\cH_j = \cX_j$. Each effect function $f_j$ is factored in the sense that there is a family of  maps $f_{(i,j)}: \cZ_{(i,j)} \to \cX_j$ such that
\begin{align*}
    \X_j := \sum_{i \in \pa(j)} f_{(i,j)}(b_{(i,j)}(\X_i)) + \boldsymbol{\eta}_j.
\end{align*}

\end{itemize}

We will call SCBMs for which both (a) and (b) hold \emph{factored SCBMs}. We illustrate Assumption \ref{ass.factored} (a) and (b) in \Cref{fig:factored}(a).

\end{assumption}

\begin{figure}[t]
    \centering
    \resizebox{.49\textwidth}{!}{
    \tikz[latent/.append style={minimum size=1cm},obs/.append style={minimum size=1cm},det/.append style={minimum size=1.15cm}, wrap/.append style={inner sep=2pt}, plate caption/.append style={below left=5pt and 0pt of #1.south east}, on grid]{
        \node[latent](x_g){$\X_g$};
        \node[obs, right=2cm of x_g](z_g){$\Z_{(g,j)}$};
        \node[latent, below=1.5cm of x_g](x_h){$\X_h$};
        \node[obs, right=2cm of x_h](z_h){$\Z_{(h,j)}$};
        \node[latent, below=1.5cm of x_h](x_i){$\X_i$};
        \node[obs, right=2cm of x_i](z_i){$\Z_{(i,j)}$};
        \node[latent, right=2cm of z_h](x_j){$\X_j$};

        \edge[-{Stealth[length=2mm, width=1.5mm]}]{x_g}{z_g};
        \edge[-{Stealth[length=2mm, width=1.5mm]}]{z_g}{x_j};
        \edge[-{Stealth[length=2mm, width=1.5mm]}]{x_h}{z_h};
        \edge[-{Stealth[length=2mm, width=1.5mm]}]{z_h}{x_j};
        \edge[-{Stealth[length=2mm, width=1.5mm]}]{x_i}{z_i};
        \edge[-{Stealth[length=2mm, width=1.5mm]}]{z_i}{x_j};

        \node[latent, right=2cm of x_j](x_ii){$\X_i$};
        \node[obs, right=2cm of x_ii](z_ii){$\Z_i$};
        \node[latent, right=2cm of z_ii](x_k){$\X_k$};
        \node[latent, above=1.5cm of x_k](x_jj){$\X_j$};
        \node[latent, below=1.5cm of x_k](x_l){$\X_{\ell}$};

        \edge[-{Stealth[length=2mm, width=1.5mm]}]{x_ii}{z_ii};
        \edge[-{Stealth[length=2mm, width=1.5mm]}]{z_ii}{x_jj};
        \edge[-{Stealth[length=2mm, width=1.5mm]}]{z_ii}{x_k};
        \edge[-{Stealth[length=2mm, width=1.5mm]}]{z_ii}{x_l};

        \node[below=2.5cm of z_h]{\textbf{(a)}};
        \node[below=2.5cm of z_ii]{\textbf{(b)}};

        \draw [dashed, gray] (5,-3.5) -- (5,0.5);
    }
    }
    \caption{Examples of \textbf{(a)} factored bottleneck and effect functions and \textbf{(b)} intrinsic bottlenecks.}
    \label{fig:factored}
\end{figure}

\paragraph{Intrinsic Structural Bottleneck Models.}

Factored bottleneck models are still very flexible and allow any variable $\X_i$ to affect its children $\X_k, \ k \in \mathrm{ch}(i),$ through different bottleneck spaces and functions. The bottleneck variables $\Z_{(i,k)} = b_{(i,k)}(\X_i)$ and $\Z_{(i,k')} = b_{(i,k')}(\X_i)$ do not need to be related and can be of different dimensions for different children $k \neq k'$. On the other hand, we might often believe that there exists an underlying low-dimensional \emph{emergent quantity} that describes the high-dimensional $\X_i$ and its effect on \emph{all} of its targets. 

\begin{definition}[Intrinsic bottlenecks]
A node $i \in \cV$ in a factored SCBM that has children admits an intrinsic bottleneck if  $\Z_{(i,j)} = \Z_i$ and $b_{(i,j)} = b_i$ do not depend on the child $j$. The effect function $f_{(i,j)}$ is still allowed to depend on $j$.
\end{definition}

We illustrate intrinsic bottlenecks in \Cref{fig:factored}(b).

\paragraph{Equivalence Relations Induced by Bottlenecks.}

If $(i,j)$ is an edge in the graph $\cG$ underlying a factored SCBM $\mathfrak C$, then the function $b_{(i,j)}: \cX_i \to \cZ_{(i,j)}$ can be considered a quotient map that induces an equivalence relation on $\cX_i$ by declaring two states $\mathbf{x}, \mathbf{x}' \in \cX_i$ \emph{bottleneck-equivalent relative to child} $j$ if $b_{(i,j)}(\mathbf{x}) = b_{(i,j)}(\mathbf{x}')$. Thus, two states of the random vector $\X_i$ are bottleneck-equivalent relative to $j$ if both lead to the same state of the bottleneck for child $j$. As a consequence, two bottleneck-equivalent states (w.r.t. $j$) are \emph{causally equivalent} w.r.t. the random vector $\X_j$ in the sense of \citet{chalupka2017causal}, i.e. they satisfy 
\begin{align*}
    P(\X_j \ | \ \mathrm{do}(\X_i = \mathbf{x})) = P(\X_j \ | \ \mathrm{do}(\X_i = \mathbf{x}')).
\end{align*}
If bottlenecks are assumed intrinsic, then bottleneck-equivalence is no longer relative to a specific child, and bottleneck-equivalent states $\mathbf{x}, \mathbf{x}' \in \cX_i$ are causally equivalent for any descendant of $i$, i.e.
\begin{align*}
    P(\X_k \ | \ \mathrm{do}(\X_i = \mathbf{x})) = P(\X_j \ | \ \mathrm{do}(\X_i = \mathbf{x}')).
\end{align*}
for every descendant $k$ of $i$.

\paragraph{Factorization of the Observational Distribution.}

In any SCM, the distribution over the observed node vectors $P = P_\X$ factorizes according to the graph $\cG$ as 
\begin{align*}
    P(\x) = \prod_{i \in \cV} P(\x_i|\x_{\pa(i)}).
\end{align*}

In SCBMs, since $\X_i$ only depends on its parents through the bottleneck variable $\Z_{\pa(i)}$, this can be rewritten as
\begin{align*}
    P(\x) = \prod_{i \in \cV} P(\x_i|\z_{\pa(i)}).
\end{align*}

In addition, if the SCBM is factored, this expression becomes
\begin{align*}
    P(\x) = \prod_{i \in \cV} P(\x_i|(\mathbf z_{(k,i)})_{k \in \pa(i)}).
\end{align*}

\paragraph{The SCM over the Bottleneck Variables.}

In a factored SCBM, the bottleneck variables $\Z_{(i,j)}$ can be expressed as

\begin{align*}
    \Z_{(i,j)}
    &= b_{(i,j)} \left(\sum_{k \in \pa(i)} f_{(i,k)}(\Z_{(k,i)}) + \boldsymbol{\eta}_i \right).
\end{align*}

This expression no longer contains any $\X$-vectors but describes $\Z_{(i,j)}$ fully in terms of other bottleneck nodes and noise terms. In contrast to the model over the $\X$-vectors, the noise terms in the equations for the $\Z_{(i,j)}$ are no longer guaranteed to be independent. In fact $\Z_{(i,j)}$ and $\Z_{(i,k)}, \ k \neq j$ share the same noise term $\boldsymbol{\eta}_i$, see Appendix \ref{app:SCM_bottled} for more.

\section{Special Cases and Interpretations}\label{sec:spec}

The class of SCBMs is large and allows to model interactions among spatial as well as temporal phenomena as we illustrate below. We also show that SCBMs can be interpreted through the lens of Tishby's information bottlenecks \citep{tishby2000information}.

\paragraph{Modeling Interactions Between Random Fields.}

The quantity of interest in geo-spatial modelling are often represented by random fields over some spatial area $\cD$. This spatial area can be a discrete grid or a continuous domain. The random field is a random function over $\cD$, which in the discrete case can be identified with a vector of dimension $N$, where $N$ is the number of grid points. The spatial structure in such a model is reflected in the correlation structure of the field. For instance, if $\cD$ is a discrete grid, a typical assumption is the spatial Markov property, which states that every node $X_i, \ i \in \cD$, is independent of non-adjacent nodes given its neighbors. In continuous random fields, a standard assumption would be that the correlation between two nodes $X(y), X(z), \ y,z \in \cD$, decreases when the spatial distance of $y$ and $z$ increases. An SCBM could for example consist of three random fields of noises $\boldsymbol{\eta}_1, \boldsymbol{\eta}_2, \boldsymbol{\eta}_3 $ over bounded spatial domains $\cD_1, \cD_2, \cD_3$ where fields affect each other only through weighted spatial means, e.g.
\resizebox{0.48\textwidth}{!}{
\begin{minipage}{0.48\textwidth}
\begin{align*}
    \X_1 &:= \boldsymbol{\eta}_1 \\
    \X_2 &:= A\cdot \int_{\cD_1} \alpha(y) \X_1(y) \ dy +  \boldsymbol{\eta}_2 \\
    \X_3 &:= B \cdot \int_{\cD_1} \beta(y) \X_1(y) \ dy + C \cdot \int_{\cD_2} \gamma(z) \X_2(z) \ dz +  \boldsymbol{\eta}_2,
\end{align*}
\end{minipage}
}

where $\alpha,\beta,\gamma$ are normalized to one, e.g. $\int_{\cD_1} \alpha(y)  \ dy = 1.$
In this case, the spaces are $\cX_i$ are function spaces over the respective domains, e.g. $\cX_i = L^2(\cD_i)$, the bottleneck spaces are one dimensional $\cZ_i \cong \mathbb R$ and the bottleneck functions are given by the weighted integrals. The effect functions are simple embeddings by multiplication with a constant, e.g. $F_{(1,2)}: \cZ_{(1,2)} \to L^2(\cD_2): z \mapsto A z \cdot  \mathbbm{1}_{\cD_2}$, where $\mathbbm{1}_{\cD_2}$ denotes the constant function on $\cD_2$.

\paragraph{Temporal Processes in the Frequency Domain.}

SCBMs can also model interactions of temporal processes in the frequency domain. For instance, $\X_1$ and $\X_2 \in L$ may be continuous stochastic processes viewed as random variables with values in $L^2(\mathbb R_{\geq 0})$. Their child process $\Y$ may be affected by $\X_1,\X_2$ within at most $K$ frequency components with frequencies $\omega_1,\dots.\omega_K$, see \citep{schur2024decor} for an example of this. Thus, the bottleneck maps $b_i: L^2(\mathbb R_{\geq 0}) \to \mathbb R^K$ map the processes to the $K$ Fourier coefficients at $\omega_1,\dots.\omega_K$, and the effect function expresses $\Y$ as $\Y_t = \sum_{k=1,\dots_K} (\lambda_1 b_1(\x_1)+ \lambda_2 b_2(\x_2)) e^{-it \omega_k} + \boldsymbol{\eta}$ where $\boldsymbol{\eta}$ is a noise process, for instance a Brownian motion.

\paragraph{SCBMs as Information Bottlenecks}

Consider two high-dimensional multivariate random variables $\X,\Y$. The idea of the information bottleneck framework of \citet{tishby2000information} is to find a minimal sufficient statistic for $\T = g(\X)$ that captures as much information about the target variable $\Y$ as possible.  This goal is captured by the \emph{minimal compression} optimization objective
\begin{align*}
    \min_{\T \ : \ I(\X,\Y | \T) = 0 } I(\X,\T).
\end{align*}
The \emph{independence constraint} $I(\X,\Y | \T) = 0$ is equivalent to $I(\X,\Y) = I(\T,\Y)$. Note also that the data processing inequality enforces
\begin{align*}
    I(g(\X),\Y) \leq  I(\X,\Y)
\end{align*}
for any abstraction function $g$, an information bottleneck is thus nothing but an abstraction of $\X$ that maximizes $I(g(\X),\Y)$. This property is used to incorporate the independence constraint as a soft constraint in the joint objective
\begin{align*}
    \min_{\T} \left(I(\X,\T) - \beta I(\Y,\T) \right)
\end{align*}
with regularization parameter $\beta > 0$.
Consider now an intrinsic SCBM over variables $\X_i$ with bottleneck variables $\Z_i = b_i(\X_i)$ for every non-sink node $i$. i.e.
\begin{align*}
    \X_j = f_j(\Z_{i_1},\dots, \Z_{i_m})+ \boldsymbol{\eta}_j \qquad \pa(j) = \{i_1,\dots,i_m \}. 
\end{align*}
Our goal is to produce an optimization objective and constraints to learn the bottleneck variables $\Z_i$. The minimal compression requirement now means that $\Z_i$ contains the minimal necessary information about $\X$ once $\X$'s  parents are known, i.e. $ \Z_i \in \mathrm{argmin} \ I(\X_i,\Z_i| \Z_{\pa(i)})$ for all $i$.

The bottleneck $\Z_i$ is intended to be maximally informative about the children of $\X_i$ provided that all backdoor paths to these children are closed which can be achieved by conditioning on the parental bottlenecks $ \Z_{\pa(i)} = \{ \Z_k, k \in \pa(i) \}$. Thus, the conditional independence constraint is
\begin{align*}
    I(\X_{\ch(i)}, \X_i | \Z_i, \Z_{\pa(i)}) = 0.
\end{align*}
for non-sink nodes $i$. By the chain rule for conditional mutual information, this is equivalent to
\begin{align*}
     I(\X_{\ch(i)}, \Z_i | \Z_{\pa(i)}) =  I(\X_{\ch(i)}, \X_i | \Z_{\pa(i)}).
\end{align*}
Thus, writing $\Z = (\Z_i)_{i \in \cV_{ns}}$, we want to solve the family of optimization objectives
\begin{align*}
    \min_{\Z \ : \ I(\X_{\ch(i)}, \X_i | \Z_i, \Z_{\pa(i)}) = 0 } I(\X_i,\Z_i|\Z_{\pa(i)}), \quad i \in \cV_{ns}.
\end{align*}
These objectives are linked by the fact that the i-th objective involves not only the bottleneck $\Z_i$ but also the bottlenecks of the parent variables which may be part of other objectives as well. Instead of solving for $\Z$ globally, it is possible to consider these objectives sequentially along a causal order. To formalize this, define the \emph{causal grading} $\cV = \bigsqcup_s \cV_s$ where
$\cV_0 = \cV_{ex}$ and $i \in \cV_s$ for $s>0$ if and only if $i \notin \cV_q, q <s$ and $\pa(i) \subset \bigsqcup_{q<s} \cV_q $.
Fixing a causal order $(i_1,\dots,i_n)$ that respects the causal grading allows us to solve the objectives 
\begin{align*}
    \min_{\Z_i}   \left( I(\X_i,\Z_i|\Z_{\pa(i)}) - \beta_i  I(\X_{\ch(i)}, \Z_i | \Z_{\pa(i)}) \right)
\end{align*}
sequentially along this order since all parental bottlenecks appearing in the $i$-th equation have been computed as solutions of earlier objectives, see also Appendix \ref{app:ssec:estim}.

\section{Identifiability}\label{sec:theo1}

To investigate identifiability of SCBMs, we observe first that it is always possible to create a new, structurally equivalent SCBM from an existing one by inserting invertible mappings on the bottleneck spaces. Lemma \ref{lem.converse_direction} then shows that for SCMs with additive noise, the converse holds as well. Proofs of the lemmas are provided in Appendix \ref{app:proofs}

\begin{lemma} \label{lem.ident1}
    Let $\mathfrak{C} = \langle \cG, \cX, \cZ, \boldsymbol{\eta}, \mathcal{B},\mathcal{F}, \X \rangle$ be an SCBM. Assume that there are invertible maps $\psi_j: \cZ_j \to \cZ'_j$ for every endogeneous node $j \in \cV_{end}$ to some other spaces $\cZ'=(\cZ'_j)_{j \in \cV_{end}}$, and consider the functions $b'_j = \psi_j \circ b_j$ and $f'_j( \cdot \ ,\ \cdot ) = f_j(\psi_j^{-1}( \ \cdot \ ),\ \cdot)$. Then $\mathfrak{C}' = \langle \cG, \cX, \cZ', \boldsymbol{\eta}, \mathcal{B}',\mathcal{F}', \X \rangle$ with $\cF' = (f'_j)_j$ and  $\cB' = (b'_j)_j$ is structurally equivalent to $\mathfrak{C}$.
\end{lemma}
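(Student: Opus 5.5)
The plan is to check directly that $\mathfrak{C}'$ is a valid SCBM in the sense of Definition \ref{def.structural_bottleneck}, and then that its induced SCM coincides with that of $\mathfrak{C}$. By Definition \ref{def.structural_equivalence}, the second check is exactly structural equivalence.

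First, the well-definedness of the new objects. Since $\psi_j$ is invertible, it has a two-sided inverse $\psi_j^{-1}: \cZ'_j \to \cZ_j$, and we take both $\psi_j$ and $\psi_j^{-1}$ to be measurable, i.e.\ we read ``invertible'' as ``bimeasurable bijection'', in line with the convention that all maps are measurable. Then:
\begin{itemize}
\item $b'_j = \psi_j \circ b_j : \prod_{i \in \pa(j)} \cX_i \to \cZ'_j$ is measurable.
\item $b'_j$ is surjective, because $b_j$ is surjective onto $\cZ_j$ and $\psi_j$ maps $\cZ_j$ onto $\cZ'_j$.
\item $f'_j(\mathbf{z}', \mathbf{h}) = f_j(\psi_j^{-1}(\mathbf{z}'), \mathbf{h})$ is a measurable map $\cZ'_j \times \cH_j \to \cX_j$, as a composition of $f_j$ with the measurable map $\psi_j^{-1} \times \mathrm{id}_{\cH_j}$.
\end{itemize}
The graph, node spaces, noise spaces and noise vectors are unchanged, so all remaining requirements of the definition carry over verbatim.

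The core step is the identity of mechanisms. For every $j \in \cV_{end}$ and all arguments,
\begin{align*}
m'_j\big((\x_i)_{i\in\pa(j)}, \mathbf{h}\big) &= f'_j\big(\psi_j(b_j((\x_i)_{i\in\pa(j)})), \mathbf{h}\big) \\
&= f_j\big(\psi_j^{-1}\psi_j(b_j((\x_i)_{i\in\pa(j)})), \mathbf{h}\big) \\
&= m_j\big((\x_i)_{i\in\pa(j)}, \mathbf{h}\big),
\end{align*}
using $\psi_j^{-1} \circ \psi_j = \mathrm{id}_{\cZ_j}$. Exogenous nodes have the trivial assignment $\X_j := \boldsymbol{\eta}_j$ in both models. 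Consequently the same random vectors $\X$ solve the structural assignments of $\mathfrak{C}'$, which justifies reusing $\X$ in the tuple. The induced SCMs $\mathfrak{A}(\mathfrak{C})$ and $\mathfrak{A}(\mathfrak{C}')$ therefore share the graph, spaces, noise, mechanisms and solution, so they coincide.

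There is no substantive obstacle here. The only delicate point is measurability of $\psi_j^{-1}$, which I handle by the bimeasurability reading above. The statement also writes the tuple without $\cH$; I treat this as an abbreviation with $\cH$ left unchanged.
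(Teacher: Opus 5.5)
Your proposal is correct and follows the same route as the paper's proof: the key identity $m'_j = f_j\bigl(\psi_j^{-1}(\psi_j(b_j(\cdot))),\cdot\bigr) = m_j$ is exactly the paper's one-line computation. The paper leaves implicit your additional checks that $\mathfrak{C}'$ is a valid SCBM, namely surjectivity of $b'_j$ and measurability of $\psi_j^{-1}$, and you handle them correctly.
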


\begin{lemma} \label{lem.converse_direction}
    Let $\mathfrak{C} = \langle \cG, \cX, \cZ, \boldsymbol{\eta}, \mathcal{B},\mathcal{F}, \X \rangle$ and $\mathfrak{C}' = \langle \cG, \cX, \cZ', \boldsymbol{\eta}, \mathcal{B}',\mathcal{F}', \X \rangle$  be two SCBMs with additive noises, i.e. $f_j(\mathbf{z}_{\mathrm{pa}(j)},\eta_j) = \tilde{f}_j(\mathbf{z}_{\mathrm{pa}(j)})+ \eta_j$, and similarly for $f'_j$. Assume that the functions $\tilde{f}_{j}: \cZ_{j} \to \cX_j, \ \tilde{f}'_{j}: \cZ_{j}' \to \cX_j$ are almost surely injective. If $\mathfrak{C}$ and $\mathfrak{C}'$ are structurally equivalent, there is an invertible function $\psi_j: \cZ_j \to \cZ_j'$ such that $b' = \psi \circ b$ and $f_j' = f_j \circ \psi^{-1}$.
\end{lemma}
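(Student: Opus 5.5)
Structural equivalence means the induced SCMs coincide, so for every endogenous $j$ the mechanisms agree. I will use this in the form
\begin{align*}
    \tilde f_j\bigl(b_j(\x_{\pa(j)})\bigr) + \eta = \tilde f'_j\bigl(b'_j(\x_{\pa(j)})\bigr) + \eta
\end{align*}
for all $\x_{\pa(j)}$ and all noise values $\eta$. Subtracting the additive noise, which is possible since $\cX_j$ is a vector space, gives the key identity $\tilde f_j \circ b_j = \tilde f'_j \circ b'_j$ on $\prod_{i\in\pa(j)}\cX_i$. If coincidence of the induced SCMs is only understood almost surely with respect to $P_\X$, I will read this identity almost surely as well and carry that qualifier through the rest of the argument.

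Next I define $\psi_j$ directly. Since $b_j$ is surjective, every $\z \in \cZ_j$ has the form $\z = b_j(\x)$ for some $\x$. I set $\psi_j(\z) := b'_j(\x)$. This is well defined: if $b_j(\x) = b_j(\x')$, the key identity gives $\tilde f'_j(b'_j(\x)) = \tilde f'_j(b'_j(\x'))$, and injectivity of $\tilde f'_j$ then forces $b'_j(\x) = b'_j(\x')$. By symmetry, using surjectivity of $b'_j$ and injectivity of $\tilde f_j$, I obtain a map $\phi_j: \cZ'_j \to \cZ_j$ with $\phi_j \circ b'_j = b_j$. Then $\phi_j\psi_j b_j = b_j$ and $\psi_j\phi_j b'_j = b'_j$, and surjectivity yields $\phi_j = \psi_j^{-1}$. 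So $\psi_j$ is invertible and $b'_j = \psi_j \circ b_j$ by construction.

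For the effect functions, compose: $\tilde f'_j \circ \psi_j \circ b_j = \tilde f'_j \circ b'_j = \tilde f_j \circ b_j$. Surjectivity of $b_j$ gives $\tilde f'_j \circ \psi_j = \tilde f_j$, that is, $\tilde f'_j = \tilde f_j \circ \psi_j^{-1}$. Adding the noise back yields $f'_j(\cdot,\eta) = f_j(\psi_j^{-1}(\cdot),\eta)$, which is the claimed relation in the form matching Lemma \ref{lem.ident1}.

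The main obstacle is the ``almost surely'' qualifier. If injectivity holds only off a null set, or the mechanisms agree only on the support of $P_\X$, then $\psi_j$ is well defined only on a full-measure subset of $\cZ_j$ (with respect to the pushforward of $P_{\X_{\pa(j)}}$ under $b_j$). There I would define $\psi_j$ arbitrarily but measurably on the exceptional set, and state the conclusions $P$-almost surely. Measurability of $\psi_j$ is a further subtlety. I would either take it as part of the standing conventions, or argue it via a measurable selection of preimages, or by observing that $\psi_j = (\tilde f'_j)^{-1}\circ \tilde f_j$ on the image, given that injective measurable maps between standard Borel spaces have measurable inverses on their images.
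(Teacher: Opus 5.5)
Your proof is correct and follows essentially the paper's argument. Both derive $\tilde f_j \circ b_j = \tilde f'_j \circ b'_j$ from structural equivalence and then combine surjectivity of the bottleneck functions with injectivity of the effect functions. The paper writes the map directly as $\psi_j = (\tilde f'_j)^{-1} \circ \tilde f_j$, using that the two effect functions have the same range. You obtain the same map by factoring $b'_j$ through $b_j$ and verify well-definedness and invertibility explicitly; the paper's two-line proof leaves these details, and the almost-sure and measurability caveats, implicit.
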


Injectivity of the effect functions is a natural minimality assumption that, together with surjectivity,  forces the bottlenecks to be as small as possible without information loss. In the linear case, this guarantees that the rank of the linear map $f_j \circ b_j$ is $\mathrm{rank}(f_j \circ b_j) = \dim \cZ_j$ and not lower. In other words, we can always reparametrize the bottleneck space in a different basis if we adapt the bottleneck and effect functions accordingly.

\begin{figure*}[t]
    \centering
    \includegraphics[width=\linewidth]{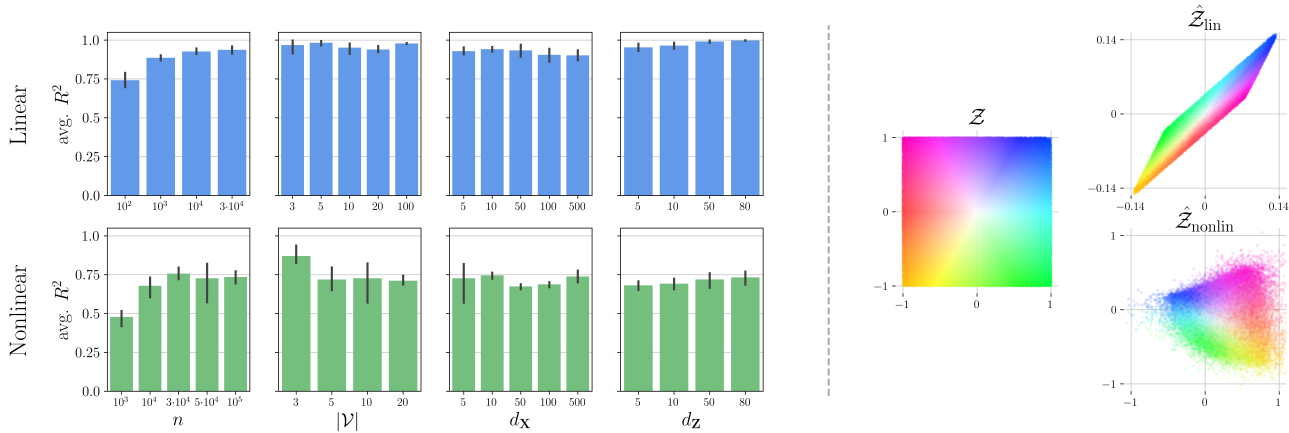}
    \caption{\textbf{Left:} Results of the identifiability experiments across various settings. We report the mean average $R^2$ along with its standard deviation. The top row shows results for linear SCBMs and the bottom row for nonlinear SCBMs. High average $R^2$ scores across models and settings indicate that we successfully learn the bottleneck variables up to a bijection. \textbf{Right:} Visualization of learned bottleneck spaces $\hat{\mathcal{Z}}$ w.r.t. to the ground-truth space $\mathcal{Z}$. For both linear and nonlinear cases, $\hat{\mathcal{Z}}$ corresponds to $\mathcal{Z}$ up to a (linear or nonlinear) bijection.}
    \label{fig:exp_id}
\end{figure*}

\section{Experiments}\label{sec:exp}
\paragraph{Estimating SCBMs in Practice}
The procedure for estimating bottleneck variables from data, given a graph, essentially amounts to fitting a regressor with constraints on the bottleneck dimension. As such, practical estimation of bottleneck variables is flexible enough to accommodate any data modality for which an estimator is available. We do not require customized loss functions with causal regularizers, as estimation only requires observational data.

Following Assumption \ref{ass.factored}, we consider settings where each parent of an endogenous node has a separate bottleneck. This translates to each edge in $\mathcal{G}$ corresponding to a unique bottleneck space. Recalling that an edge between variables $\X_i$ and $\X_j$ decomposes into a bottleneck function $b_{(i, j)}$, mapping to the bottleneck $\Z_{(i, j)}$, and an effect function $f_{(i, j)}$ that maps back to $\X_j$, our goal is to recover $b_{(i, j)}$ and $f_{(i, j)}$. We fit an estimator from $\X_i$ to $\X_j$ which recovers an estimate of the joint map $m_{(i, j)} := f_{(i, j)} \circ b_{(i, j)}: \mathcal{X}_i \times \mathcal{H}_j \to \mathcal{X}_j$. Any valid conditioning set for estimating the causal effect $\X_i \rightarrow \X_j$ is also valid for estimating this joint map. As per Lemma \ref{lem.converse_direction}, from the estimated joint map $\hat{m}_{(i, j)}$ we recover $b_{(i, j)}$ and $f_{(i, j)}$ up to an invertible map $\psi_{(i, j)}$, i.e., $\hat{b}_{(i, j)} := \psi_{(i, j)} \circ b_{(i, j)}$ and $\hat{f}_{(i, j)} := f_{(i, j)} \circ \psi^{-1}_{(i, j)}$. In the linear case, recovering $\hat{f}_{(i, j)}$ and $\hat{b}_{(i, j)}$ from $\hat{m}_{(i, j)}$ amounts to finding a matrix factorization in terms of factors with specified ranks, in the nonlinear case we recover the factorization by means of an encoder-decoder architecture. See Appendix \ref{app:ssec:estim} for a detailed account of the estimation procedure. Code to reproduce our experiments is available \url{https://github.com/simonbing/StructuralCausalBottleneckModels}.

\subsection{Identifiability}\label{ssec:exp_id}

\paragraph{Setup.} Our first set of experiments aims to validate our identifiability theory presented in \Cref{sec:theo1}. To do so, we sample data from a joint distribution induced by an SCBM $\mathfrak{C}$, estimating all bottleneck variables and measuring how well these estimated variables $\hat{\Z}$ recover the ground-truth bottlenecks $\Z$. We randomly sample SCBMs with number of vertices $|\mathcal{V}|$, internal node dimension $d_{\X}$, bottleneck dimension $d_{\Z}$ and linear or nonlinear mechanisms. See Appendix \ref{app:ssec:datagen} for all details on the data generating process. We vary these parameters to study how well bottlenecks can be recovered across a diverse range of settings. Default settings are $|\mathcal{V}| = 10,~ d_{\X} = 5,~ d_{\Z} = 2$ and $n=30000$ for linear and $n=50000$ for nonlinear meachnisms, respectively.

Since our notion of identifiability amounts to estimating the ground-truth bottlenecks up to a bijection (cf. Lemma \ref{lem.converse_direction}), we fit an estimator between each ground-truth $\Z_{(i,j)}$ and its estimate $\hat{\Z}_{(i,j)}$ in \emph{both} directions and use the average of the $R^2$ of both fits as our metric. Considering both directions is required as we wish to test equivalence up to a bijection; a surjective map between $\Z_{(i,j)}$ and $\hat{\Z}_{(i,j)}$ would achieve a perfect score in one direction, but not in the other. The final score we report is the mean of this metric across all nodes.

\paragraph{Results.}
In linear SCBMs, we successfully recover bottleneck variables across all settings. Performance improves quickly with sample size, saturating around $n = 10000$, and remains strong as the number of nodes increases, suggesting minimal error propagation. As $d_{\X}$ increases, $R^2$ scores slightly drop due to greater compression demands. When varying $d_{\Z}$ (with $d_{\X} = 100$), performance stays high and converges to perfect as $d_{\Z}$ approaches $d_{\X}$, as expected.

For nonlinear SCBMs, scores are lower, though not beyond what is expected from a harder estimation problem. Nonlinear mechanisms require more samples, with performance saturating around $n=30000$. Increasing $|\mathcal{V}|$ leads to decreased performance, indicating larger error propagation than in the linear case, though this does not cause catastrophic failure.\footnote{Experiments for $|\mathcal{V}|=100$ exceeded our compute budget, as such graphs have $\sim$3500 edges, each requiring a neural network for bottleneck estimation. In practice, one would estimate only the bottlenecks needed for a specific query.} Larger $d_{\X}$ does not affect performance. When varying $d_{\Z}$ (again with $d_{\X}=100$), performance slowly increases as $d_{\Z}$ approaches $d_{\X}$.

We also plot exemplary learned bottleneck spaces in \Cref{fig:exp_id}, illustrating the invertible transformation $\psi$ up to which we recover bottleneck variables (cf. Lemma \ref{lem.converse_direction}). In the linear case, we see the learned space transformed by rotation, scaling, and shearing, i.e., an affine transformation. For nonlinear mechanisms, the map from ground-truth to learned space is a smooth bijection preserving local structure. Additional visualizations are provided in Appendix~\ref{app:sec:id_viz}.

\subsection{Misspecification}
\paragraph{Setup}
In the previous experiment, we assumed knowledge of the ground-truth bottleneck dimension $d_{\Z}$, an assumption that we will now drop. To study the effect of varying the assumed bottleneck dimension of our estimator, we consider a cause-effect pair, both with internal dimension $d_{\X}$ and an associated bottleneck variable with dimension $d_{\Z}$. We set $d_{\X} = 50$ for the linear, $d_{\X} = 100$ for the nonlinear case and $d_{\Z} = 10$ for both. Starting from $d_{\hat{\Z}} = 1$, we increase the bottleneck dimension assumed by the estimator. 

\begin{figure}
    \centering
    \includegraphics[width=0.65\linewidth]{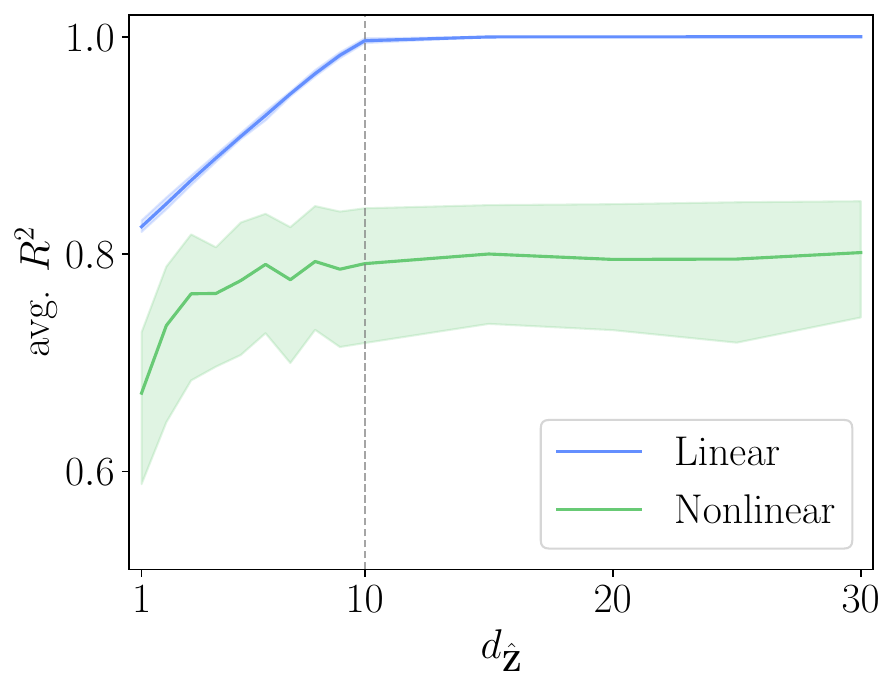}
    \caption{Mean average $R^2$ and 95\% confidence interval of identifying bottlenecks with misspecified assumed bottleneck dimension $d_{\hat{\Z}}$. The dashed vertical line indicates the ground-truth bottleneck dimension. For both linear and nonlinear settings, the metric increase until it saturates at $d_{\hat{\Z}} = d_{\Z}$, indicating that the true bottleneck dimension is a lower bound for identifiability.}
    \label{fig:lin_misspec}
\end{figure}

\paragraph{Results}
For both linear and nonlinear mechanisms, our identifiability metric increases with bottleneck dimension until saturating at $d_{\hat{\Z}} = d_{\Z}$ (\Cref{fig:lin_misspec}). This is expected: the ground-truth $d_{\Z}$ is the minimum capacity needed to recover all bottleneck information. More compression leads to information loss, but less compression does not hurt. This highlights a key practical difference to approaches like CRL, where assuming the correct latent dimension is critical. For bottleneck models, the ground-truth dimension is a lower bound, whereas for CRL, both under- \emph{and} overestimating the latent dimension compromises identifiability guarantees.

\subsection{Transfer Learning}\label{ssec:exp_tf}

\paragraph{Setup.} 
Consider a three variable SCBM with graph $\mathcal{G}$ depicted in \Cref{fig:exp_tf_graph}. Our query is the effect of $\X_1$ on $\X_2$, which is confounded by $\X_3$. We assume access to few samples of the joint distribution ${\X_1, \X_2, \X_3}$, but orders of magnitude more samples of ${\X_1, \X_3}$. This is related to the causal marginal problem \citep{gresele2022causal}, where samples come from different environments with varying variable coverage. As motivation, consider $\X_1$ as rainfall, $\X_2$ as vegetation growth, and $\X_3$ as cloud coverage. Measurements of $\X_1$ and $\X_3$ may come from a local station collecting samples at high frequency, while joint measurements including vegetation are collected from a satellite at much lower frequency. Estimating the effect $\X_1 \rightarrow \X_2$ requires conditioning on the confounder $\X_3$, which given high node dimensionality and small joint sample size, is likely ill-conditioned.

We study whether abundant data from ${\X_1, \X_3}$ can improve effect estimation in this low sample regime. Specifically, we use these samples to estimate $\hat{\Z}_{(3, 1)}$, which can replace $\X_3$ for conditioning since it blocks the confounding path (see Appendix \ref{app:SCM_bottled}). Since bottleneck variables have lower dimension than observed variables, using them for conditioning should be beneficial when joint samples of ${\X_1, \X_2, \X_3}$ are scarce but observations of ${\X_1, \X_3}$ are abundant—the low-dimensional bottlenecks yield a larger effective sample size. For the linear case we set $d_{\X} = 50$ and $d_{\Z}=2$; for the nonlinear case $d_{\X}=500$. We use $n=20000$ samples to estimate the bottleneck, then study effect estimation performance for varying joint sample sizes.

\pgfdeclarelayer{background}
\pgfsetlayers{background,main}
\begin{figure}[t]
    \centering
    \resizebox{.25\textwidth}{!}{
    \tikz[latent/.append style={minimum size=0.85cm},obs/.append style={minimum size=0.85cm},det/.append style={minimum size=1.15cm}, wrap/.append style={inner sep=2pt}, plate caption/.append style={below left=5pt and 0pt of #1.south east}, on grid]{
        \node[latent](x_3){$\X_3$};
        \node[latent, below=2.1cm of x_3, xshift=-1.4cm](x_1){$\X_1$};
        \node[latent, below=2.1cm of x_3, xshift=1.4cm](x_2){$\X_2$};

        \begin{pgfonlayer}{background}
        \node[left=0.6cm of x_3, yshift=0.29cm](cloud){\includegraphics[width=0.8cm]{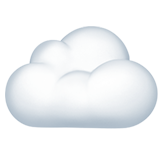}};
        \node[right=0.45cm of x_1, yshift=-0.45cm](water){\includegraphics[width=0.8cm]{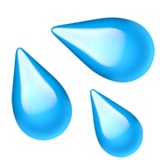}};
        \node[above=0.6cm of x_2, xshift=0.2cm](water){\includegraphics[width=0.8cm]{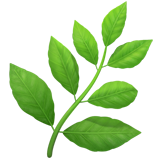}};
        \end{pgfonlayer}

        \edge[-{Stealth[length=2mm, width=1.5mm]}]{x_1}{x_2};
        \edge[-{Stealth[length=2mm, width=1.5mm]}]{x_3}{x_1, x_2};

        \plate [dashed] {e_1} {(x_1)(x_2)(x_3)} {$e_1$};
        \plate [dashed] {} {(x_1)(x_3)(e_1.north west)(e_1.south west)} {$e_2$};
    }
    }
    \caption{Graph of the SCBM used for the transfer learning experiments. We assume that samples from the environment $e_1$, where all variables are jointly observed, are relatively scarce compared to the number of samples of environment $e_2$, where we only jointly observe $\X_1$ and $\X_3$.}
    \label{fig:exp_tf_graph}
\end{figure}

\begin{figure}[t]
    \centering
    \includegraphics[width=\linewidth]{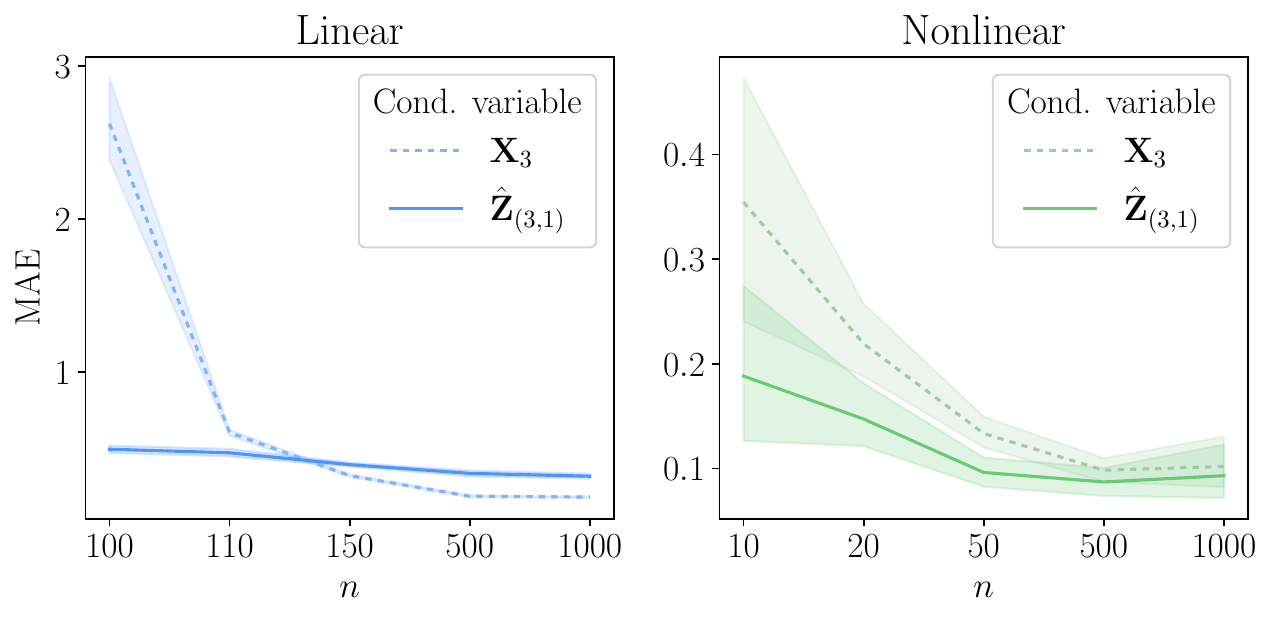}
    \caption{Mean absolute error (MAE) and $95\%$ confidence interval of estimating the effect $\X_1 \rightarrow \X_2$ using different conditioning variables. For both linear and nonlinear SCBMs, using the estimated bottleneck variable is beneficial for small samples sizes.}
    \label{fig:exp_tf}
\end{figure}

\paragraph{Results.}

For linear models---as shown in \Cref{fig:exp_tf}---in the very low sample regime, using the bottleneck as conditioning variables provides a substantial benefit in terms of incurred error w.r.t. directly conditioning on the observed variable. This benefit subsides as the sample size increases, but nevertheless highlights the settings where bottleneck variables may be useful in practice. For nonlinear models, the same holds, however only becoming pronounced for larger internal node dimension $d_{\X}$. A connection to pursue in future work is how the gains of using bottleneck variables over observed high-dimensional conditioning sets fits in with the literature of optimal adjustment sets \citep{runge2021necessary,henckel2022graphical}. It would be interesting to study cases where conditioning on an optimal, high-dimensional set is possible, as well as on a lower-dimensional, suboptimal bottleneck set and formalizing their tradeoff.

\section{Related Work}\label{sec:rel_work}

\paragraph{Causal Representation Learning.}
Although we also learn representations of high-dimensional observations, our approach differs from canonical causal representation learning (CRL) approaches \citep{scholkopf2021causal, brehmer2022weakly, ahuja2023interventional, lippe2022citris, lippe2022icitris, lippe2023biscuit, squires2023linear, buchholz2023learning, liang2023causal, varici2023scorebased, bing2024identifying, lachapelle2024nonparametric, vonkugelgen2024nonparametric, yao2024multi} in key ways. Instead of mapping observations to SCM nodes, we learn \emph{multiple} maps tied to mechanisms between known (vector-valued) SCM nodes. Unlike most CRL methods that seek latent variables up to permutation and rescaling, we allow a broader class of invertible transformations for identifiability. Additionally, while CRL assumes invertible maps, we focus on surjective maps to enable true dimension reduction by discarding irrelevant observational details.

CRL aims to recover a latent low-dimensional SCM including its constituting variables. In contrast, learning bottlenecks in an SCBM is focused on causal effect estimation. We learn representations assuming a known graph and target a specific downstream causal query. Defining causal representations by their usefulness for downstream tasks, rather than recovering a single postulated ground-truth latent model, has been proposed as a step forward for CRL \citep{jorgensen2025causal}, and our targeted approach aligns with this view.

\paragraph{Causal Abstractions.}
Our work is related to causal abstraction learning, but \emph{what} we abstract is different from existing approaches. Common approaches abstract causal models as a whole \citep{zennaro2023jointly, felekis2024causal, xia2024neural, massidda2024learning, d2025causal}, while our notion of abstraction is a within-model operation that reduces a random vector to the essentials needed for downstream effect estimations.

An approach closely related to ours is the causal feature learning method proposed by \citet{chalupka2017causal}, which learns to partition the spaces of a cause-effect pair to extract high-level features from low-level observations. The SCBM framework extends this idea to settings with more than two variables and allows for continuous high-level variables (bottlenecks), as opposed to only permitting discrete variables as a result of a clustering operation.

The Causal Information Bottleneck (CIB) \citep{simoes2024optimal} seeks a representation for a specific causal query, but differs from our approach in its problem setting and method. The CIB framework focuses on a single cause-effect pair which must be identifiable via the backdoor criterion, while we handle arbitrary acyclic graphs. \citet{simoes2024optimal} extend the Information Bottleneck Lagrangian of \citet{tishby2000information} to trade off compression and what they call ``causal control” via a constrained optimization, whereas we assume a data generating process that conceptualizes optimal low-dimensional representations directly and enables bottleneck estimation through regression or likelihood-based losses. The CIB method requires access to all conditional probability distributions, as it must compute $p(\Y \mid \text{do}(\X = \x) )$, while we use only observational data and graph structure. Additionally, CIB assumes discrete variables; we allow both discrete and continuous.

\paragraph{Dimension Reduction.}
Principle component analysis (PCA) \citep{pearson1901liii} and similar dimension reduction techniques can be understood to represent the opposing side of a trade-off between training effort required before application and guarantees over retained relevant information for a downstream task. PCA does not need to be fit to a specific setting, but may also discard data relevant to a downstream task. In our framework, we do require additional data to fit an estimator of a given bottleneck, but we gain guarantees of the compressed representation being optimal for describing the specific mechanism of interest.

In a setting with only one regressor $\X$ and one regressand $\Y$, the task of finding an appropriate bottleneck variable is known as sufficient dimension(ality) reduction \citep{globerson2003sufficient, li2007sparse,li2018sufficient}. If in addition the relationship of $\X$ and $\Y$ is assumed linear, \emph{reduced-rank regression} \citep{izenman1975reduced} implicitly assumes the existence of a low-dimensional bottleneck variable by restricting the rank of the effect matrix.

\paragraph{Connections to Matching.}

Matching is a prominent technique for estimating the effect of an (often binary) treatment variable $T$ on an outcome $Y$ in the presence of confounding covariates $X_1,\dots, X_k$. Roughly speaking, the idea behind matching is to pair treated and untreated units that are as similar as possible in terms of their covariate values. \emph{Exact matching}, that is, matching directly in the covariate space $\mathcal{X}_1 \times \dots \times \mathcal{X}_k$,for instance by measuring the Euclidean distance of the covariate values of the units, is often unfeasible if the sample size is small compared to the number of covariates. In this case, matching is commonly done by measuring the similarity of units in a bottleneck space. For instance, \emph{propensity score matching} declares units similar that are close in propensity score $e(\x) = P(T=1 | \X=\x)$ \citep{rosenbaum1983central} which in our terminology is a $[0,1]$-valued bottleneck between the covariates and the treatment. \emph{Coarsened exact matching} \citep{blackwell2009cem} interpolates between exact matching and the propensity score by operating in a coarser bottleneck space. 

\section{Outlook}\label{sec:outlook}

We introduce Structural Causal Bottleneck Models, a novel family of graphical causal models for vector-valued variables. The assumption at the heart of SCBMs is that effects between high-dimensional variables can be equivalently described using low-dimensional bottleneck variables. We formalize this idea and provide an identifiability result that characterizes when these bottlenecks can be learned from data. We provide a proof-of-concept estimation method to validate our theory and underline the benefit of SCBMs for transfer learning. Our estimation method is relatively simple in comparison to most CRL and causal abstraction approaches. Our hope is that this provides some robustness to assumption violations in real data, which seem to negatively affect these methods \citep{gamellabing2025sanity}. How SCBMs map to application problems was beyond the scope of this first, introductory work and remains to be studied.

Future research directions include developing application specific estimators, as our methodology generally permits fitting any model compatible with in- and output data modalities, without requiring a bespoke causal loss. Formally characterizing the optimality gains that bottleneck variables as conditioning sets provide in low-sample settings also presents a promising future direction of study. Finally, understanding if and how methods for causal discovery could exploit the bottleneck assumption to learn the graph we assume to know is a promising premise for future work.

\section*{Acknowledgements}

S.B. and J.R. received funding for this project from the European Research Council (ERC) Starting Grant CausalEarth under the European Union’s Horizon 2020 research and innovation program (Grant Agreement No. 948112). S.B. received support from the German Academic Scholarship Foundation. J.W. received support from the German Federal Ministry of Education and Research (BMBF) as part of the project MAC-MERLin (Grant Agreement No. 01IW24007). This work used resources of the Deutsches Klimarechenzentrum (DKRZ) granted by its Scientific Steering Committee (WLA) under project ID 1083.

\section*{Impact Statement}

This paper presents a novel type of causal graphical model. Such models may play a role in Machine Learning and Artificial Intelligence applications in the future, but as the work presented here amounts to basic research, we feel there are no direct ethical or societal consequences that should be highlighted.


\bibliography{references}
\bibliographystyle{icml2026}

\newpage
\appendix
\onecolumn

\section{Proofs} \label{app:proofs}

\begin{proof}[Proof of Lemma \ref{lem.ident1}]
 The result follows directly from the fact that the structural equations in both models coincide:
 \begin{align*}
      m'_j\left((\X_i)_{i \in \mathrm{pa}(j)},\boldsymbol{\eta}_j\right) 
      &= f'_j \left( b_{j}'\left((\X_i)_{i \in \mathrm{pa}(j)} \right), \boldsymbol{\eta}_j \right) 
       \\ &= f_j(\psi_j^{-1}( \psi_j \left(  b_{j}\left((\X_i)_{i \in \mathrm{pa}(j)} \right) \right) \cdot \ ),\eta_j) 
       \\ &= f_j \left( b_{j}\left((\X_i)_{i \in \mathrm{pa}(j)} \right), \boldsymbol{\eta}_j \right) 
       \\ &= m_j\left((\X_i)_{i \in \mathrm{pa}(j)},\boldsymbol{\eta}_j\right) 
 \end{align*}
\end{proof}

\begin{proof}[Proof of Lemma \ref{lem.converse_direction}]
    Since the models are structurally equivalent and the noise terms are the same in both models, it follows directly that $f_j \circ b_j = f_j' \circ b_j'$ $P_{\X}$-almost surely. In particular the maps $f_j, f_j'$ have the same range, so that the map $\psi_j = f_j'^{-1} \circ F_j$ is well-defined and has the desired properties.   
\end{proof}

\section{An SCM over the Bottleneck Variables} \label{app:SCM_bottled}

Consider a factored SCBM in the sense of Assumption \ref{ass.factored} with node variables $\X_i, \ i \in \mathcal{V}$, associated exogeneous noise terms $\boldsymbol{\eta}_i, i \in \mathcal{V}$ and bottleneck variables $\Z_{(i,j)}, \ (i,j) \in \mathcal{E}$. For each $\Z_{(i,j)} = b_{(i,j)}(\X_i)$ we can write a new structural equation
\begin{align*}
    \Z_{(i,j)} := h_{(i,j)} \left( (\Z_{k,i})_{k \in \mathrm{pa}(i)}, \boldsymbol{\eta}_i \right)
\end{align*}
where $h_{(i,j)} \left( (\Z_{(k,i)})_{k \in \mathrm{pa}(i)}, \boldsymbol{\eta}_i \right) = b_{(i,j)}\left( \sum_{k \in \mathrm{pa}(i)} f_{(k,i)}(\Z_{(k,i)}) +  \boldsymbol{\eta}_i\right)$. Note that the noise term in this equation is no longer additive and that bottleneck variables $\Z_{(i,j)}, \Z_{(i,l)}$ with the same parent $\X_i$ share the same exogenous noise term $\boldsymbol{\eta}_i$. This does not imply that the shared noise terms necessarily induce dependence: for instance, the noise term may be two-dimensional and $b_{(i,j)}$ may only depend on the first component while $b_{(i,k)}$ depends on the second. Nevertheless, if we consider the SCM consisting of all structural equations for the bottleneck variables, generically, the underlying causal graph of this SCM can no longer be represented by a DAG, as the exogenous noise terms in this SCM need not be independent. Instead, the observational distribution over the bottlenecks is Markovian with respect to a mixed graph $\mathcal{D}$ containing bidirected edges $\Z_{(i,j)} \leftrightarrow \Z_{(i,l)}$ as well as directed edges $\Z_{(k,i)} \to \Z_{(i,j)}$. We can assess adjustment claims using these new structural equations. For instance, if we want to estimate the causal effect of a treatment $\T = \X_k$ on an outcome $\mathbf{O} = \X_l$ and $\X_i$ blocks a backdoor path $\T \leftarrow \dots \X_s \leftarrow \X_i \rightarrow \X_{s'} \rightarrow \dots \rightarrow \mathbf{O}$ in the original DAG, then either of the bottleneck variables $\Z_{(i,s)}$ or $\Z_{(i,s')}$ block the induced backdoor path $\T \leftarrow \dots \X_s \leftarrow \Z_{(i,s)} \leftrightarrow\Z_{(i,s')} \rightarrow \X_{s'} \rightarrow \dots \rightarrow \mathbf{O}$ in the mixed graph $\mathcal{D}$ underlying the SCM in which the equation for $\X_i$ was replaced by the equations for its bottlenecks. For frontdoor adjustment, the situation is more complicated. If we would like to estimate a direct effect $\T \to \mathbf{O}$ and need to close a frontdoor path $\T \to \X_1 \to \dots \to \X_s \to  \mathbf{O}$, we are not allowed to use the bottleneck on the edge $\T \to \X_1$ for adjustment as it might be correlated (or even equal) to the bottleneck on the edge $\T \to \mathbf{O}$ which we like to estimate. We describe a procedure to circumvent this issue in the paragraph \emph{Estimating All Bottlenecks} of Appendix \ref{app:ssec:estim}.

\newpage
\section{Additional Identifiability Visualizations}\label{app:sec:id_viz}

\begin{figure*}[!h]
    \centering
    \includegraphics[width=\linewidth]{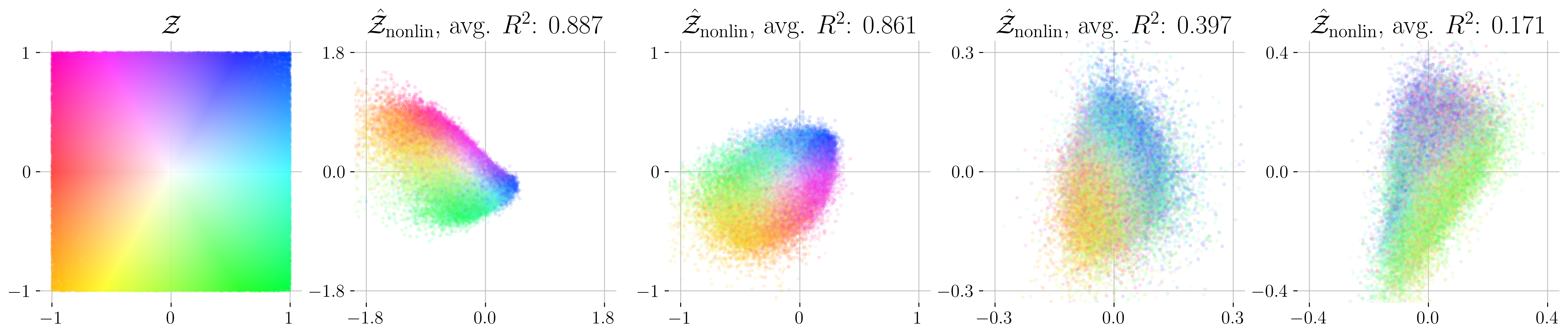}
    \caption{Visualizations of maps from ground truth bottleneck space $\mathcal{Z}$ (first plot) to different learned bottleneck spaces $\hat{\mathcal{Z}}_{\mathrm{nonlin}}$. The second and third plots show examples of successfully identified bottlenecks, as visualized by the smooth, structure preserving bijections from ground-truth to learned bottleneck spaces. We additionally show the identifiability metric, which in both cases is close to one. The third and fourth plots show examples where identifiability is not achieved (by not training to convergence). For such non-identified cases, the learned map is not a bijection and the local structure of the ground-truth bottleneck space is not preserved. The low $R^2$ metric in these failure cases further reflects non-identification of the ground-truth bottleneck.}
    \label{fig:id_viz_multi}
\end{figure*}

\section{Additional Experimental Details}

\subsection{General Data Generation Process}\label{app:ssec:datagen}
We generate data by first randomly sampling an SCBM $\mathfrak{C}$ and then drawing $n$ samples from the joint distribution induced by $\mathfrak{C}$. To sample an SCBM, we set the number of vertices $\mathcal{V}$, the internal dimension of the nodes $d_{\X}$ and the dimension of the bottleneck spaces $d_{\Z}$. With these fixed parameters, first the graph $\mathcal{G}$ is sampled from an Erdős–Rényi model \citep{erdos1959random} with edge probability $p=0.7$. Then, for each node $\X_j$ we sample the distribution of its respective noise term from a Markov Random Field whose joint distribution is a Gaussian and internal dynamics are described by the Langevin diffusion \citep{lauritzen2002chain} with dimension $d_{\X}$. For each edge in $\mathcal{G}$, we randomly sample both a bottleneck function $b_{(i,j)}$, as well as an effect function $f_{(i,j)}$. For linear models, we sample $b_{(i,j)}$ by sampling a random matrix $\mathbf{B} \in [0,1]^{d_{\X} \times d_{\Z}}$ and $f_{(i,j)}$ by sampling a random matrix $\mathbf{F} \in [0,1]^{d_{\Z} \times d_{\X}}$, both with $\text{rank} = d_{\Z}$. For nonlinear models, $b_{(i,j)}$ and $f_{(i,j)}$ are implemented by randomly initialized, multilayer perceptrons (MLPs). Details for each experimental setup are provided in \Cref{app:ssec:hyper}. See also \Cref{app:sec:mlps} for a detailed discussion on sampling nonlinear functions. Unless specified otherwise, by default we use the parameters $|\mathcal{V}| = 10$, $d_{\X} = 5$, $d_{\Z} = 2$ and $n = 30000$ for linear models and $n=50000$ for nonlinear models, respectively. We conduct experiments where we vary one of these parameters while keeping all others fixed.

\subsection{Detailed Estimation Procedure}\label{app:ssec:estim}


As per Assumption \ref{ass.factored}, we consider the case where there is a separate bottleneck for each parent of an endogenous node. Graphically, this means there is a bottleneck space for each edge in a model's graph. Since each edge between variables $\X_i$ and $\X_j$ can be decomposed into a bottleneck function $b_{(i,j)}$, that maps to the corresponding bottleneck space $\mathcal{Z}_{(i, j)}$ and an effect function $f_{(i,j)}$ that maps from the bottleneck space to $\mathcal{X}_j$, the targets of the estimation procedure are the bottleneck function $b_{(i,j)}$ and the effect function $f_{(i,j)}$. We recover the joint map $m_{(i,j)} := f_{(i,j)} \circ b_{(i,j)}: \mathcal{X}_i \times \mathcal{H}_j \to \mathcal{X}_j$ by fitting an estimator from $\X_i$ to $\X_j$. The conditioning sets required for this estimation procedure are analogous to standard causal effect estimation. As our identifiability results presented in \Cref{sec:theo1} tell us, from the estimated composed map $\hat{m}_{(i,j)}$ we are able to recover both $b_{(i,j)}$ and $f_{(i,j)}$ up to an invertible map $\psi_{(i,j)}$, i.e., $\hat{b}_{(i,j)} := \psi_{(i,j)} \circ b_{(i,j)}$ and $\hat{f}_{(i,j)} := f_{(i,j)} \circ \psi^{-1}_{(i,j)}$. 
How $\hat{b}_{(i,j)}$ and $\hat{f}_{(i,j)}$ are recovered from $\hat{m}_{(i,j)}$ is described in detail in the following. Code to reproduce all experimental results is available at \url{https://github.com/simonbing/StructuralCausalBottleneckModels}.
We implement our estimation pipeline using the JAX machine learning library \citep{jax2018github}.

\paragraph{Conditioning Sets.}
The procedure for estimating the bottleneck space corresponding to the edge $\X_i \rightarrow \X_j$ is analogous to estimating the direct effect of $\X_i$ on $\X_j$. As such, we can use any conditioning set for this estimation that is valid for effect estimation. Given the source node $\X_i$ and target node $\X_j$, a valid conditioning set is $\{\X_{\pa(i)}, \X_{\pa(j)}\}$. Since bottleneck variables are deterministic transforms of their parent nodes, we can equivalently use the set of (lower dimensional) bottleneck variables $\{(\Z_{(k, i)})_{k \in \pa(i)}, (\Z_{(\ell, j)})_{\ell \in \pa(j)}\}$ for conditioning.

\paragraph{Linear vs. Nonlinear Estimators.} 
Let $\X_i \in \mathbb R^{d_i}, \X_j \in \mathbb R^{d_j}$ and $\Z_{(i, j)} \in \mathbb R^{d_{(i, j)}}$. Fitting an estimator between $\X_i$ and $\X_j$ results in a map $\hat{m}_{(i,j)}: \mathbb R^{d_i} \mapsto \mathbb R^{d_j}$ from which we must extract the estimates of the bottleneck function $\hat{b}_{(i,j)}: \mathbb R^{d_i} \mapsto \mathbb R^{d_{(i, j)}}$ and the effect function $\hat{f}_{(i,j)}: \mathbb R^{d_{(i, j)}} \mapsto \mathbb R^{d_j}$. In the linear case, fitting an estimator returns a weight matrix $\hat{\mathbf{M}} \in \mathbb R^{d_i \times d_j}$, which by the assumption of our data generating process has rank $d_{(i, j)} << d_i, d_j$. Extracting $\hat{b}_{(i,j)}$ and $\hat{f}_{(i,j)}$ amounts to finding a matrix factorization $\hat{\mathbf{M}} = \hat{\mathbf{B}}_{(i,j)} \hat{\mathbf{F}}_{(i,j)}$, where $\hat{\mathbf{B}}_{(i,j)} \in \mathbb R^{d_i \times d_{(i, j)}}$ and $\hat{\mathbf{F}}_{(i,j)} \in \mathbb R^{d_{(i, j)} \times d_j}$. We recover $\hat{\mathbf{B}}_{(i,j)}$ by selecting $d_{(i,j)}$ linearly independent columns from $\hat{\mathbf{M}}$ and $\hat{\mathbf{F}}_{(i,j)}$ by computing $\hat{\mathbf{F}}_{(i,j)} = \hat{\mathbf{B}}^+_{(i,j)} \hat{\mathbf{M}}$, where $(\cdot)^+$ denotes the Moore-Penrose inverse \citep{moore1920reciprocal, bjerhammar1951application, penrose1955generalized}.
In the case of a nonlinear estimator, we go about this factorization by means of the chosen network architecture. We employ an encoder-decoder structure where we train an encoder network $\hat{b}_{\theta}$, parametrized by weights $\theta$, to map from the source node $\X_i$ to the bottleneck $\Z_{(i, j)}$ and a decoder network $\hat{f}_{\phi}$, parametrized by weights $\phi$, to map from the bottleneck to the target node $\X_j$.

\begin{figure}[t]
    \centering
    \resizebox{.22\textwidth}{!}{
    \tikz[latent/.append style={minimum size=0.85cm},obs/.append style={minimum size=0.85cm},det/.append style={minimum size=1.15cm}, wrap/.append style={inner sep=2pt}, plate caption/.append style={below left=5pt and 0pt of #1.south east}, on grid]{
        \node[latent](x_1){$\X_1$};
        \node[latent, right=2.5cm of x_1](x_2){$\X_2$};
        \node[latent, below=2.5cm of x_2](x_3){$\X_3$};

        \edge[-{Stealth[length=2mm, width=1.5mm]}]{x_1}{x_2, x_3};
        \edge[-{Stealth[length=2mm, width=1.5mm]}]{x_2}{x_3};

    }
    }
    \caption{Three-node, fully connected graph.}
    \label{fig:toy_graph}
\end{figure}

\paragraph{Estimating All Bottlenecks.}
If our goal is to estimate \emph{all} bottlenecks in a given SCBM, we need to take special care of the order in which we do this, especially if we plan to use previously estimated bottlenecks as conditioning variables. To illustrate this point, consider the following three-node, fully connected graph, presented in \Cref{fig:toy_graph}. The associated SCBM $\mathfrak{C}$ has three bottlenecks: $\Z_{(1,2)}, \Z_{(1,3)}$ and $\Z_{(2,3)}$. 

As a first approach, let us naïvely estimate these bottleneck variables in the causal order of their parents. The causal ordering for this example is $(\X_1, \X_2, \X_3)$. Thus, we start by estimating $\Z_{(1,2)}$, for which we fit an estimator $\X_1 \rightarrow \X_2$ without requiring any conditioning. So far, so good. Next up in this order is $\Z_{(1,3)}$. Looking at our graph $\mathcal{G}$ (cf. \Cref{fig:toy_graph}), we see that we need to close the path $\X_1 \rightarrow \X_2 \rightarrow \X_3$, which we want to do with a bottleneck variable. Graphically, we might be tempted to use $\hat{\Z}_{(1,2)}$ from the first step, since this bottleneck variable ``lives" on the edge $\X_1 \rightarrow \X_2$, which is part of the path we want to close. However, remembering that bottleneck variables are \emph{deterministic} transformations of their parents, i.e., $\Z_{(i,j)} := b_{(i,j)}(\X_i)$, we see that this is not a valid conditioning set, as it contains the same information as our source $\X_1$. A valid conditioning bottleneck for this case would be $\Z_{(2,3)}$, but we have not yet estimated it! This approach clearly does not work, in general.

We propose an alternative approach. First, notice that for any given bottleneck estimation routine we require a source $\X_i$ and a target $\X_j$. We propose to first loop over candidate target nodes in the causal order implied by the graph $\mathcal{G}$ and then for each target loop over its parents along their \emph{reverse} causal order. Let us return to our example to illustrate how this new approach helps. We begin the outer loop over targets along the causal order: $\X_1$ is the first candidate target, but since it has no parents (and therefore no ancestral bottlenecks) we skip it and continue with $\X_2$. Since $\X_1$ is the only parent of $\X_2$ we estimate $\Z_{(1,2)}$ by fitting an estimator between $\X_1 \rightarrow \X_2$. Moving along the causal order the next target is $\X_3$ with parents $(\X_1, \X_2)$. Progressing along the reverse causal order, we begin with $\X_2$ as a source node and estimate $\Z_{(2,3)}$. Graphically, we must close the backdoor path $\X_2 \leftarrow \X_1 \rightarrow \X_3$. We can use the estimate $\hat{\Z}_{(1,2)}$ from the previous step as a conditioning variable to do so. What remains is estimating $\Z_{(1,3)}$, and as we noticed in the naïve approach above, this requires conditioning on $\Z_{(2,3)}$. Now it becomes evident why our proposed approach works where the naïve approach fails: all required conditioning bottlenecks have already been estimated in previous steps.

We can generalize this approach to DAGs beyond the simple example discussed above by recalling how a general estimation problem for a bottleneck variable looks like. Estimating a given bottleneck $\Z_{(i,j)}$ is analogous to estimating the direct effect of a given edge $\X_i \rightarrow \X_j$. To this end, we want to block all front- and backdoor paths between source $\X_i$ and target $\X_j$. Backdoor paths by definition include a parent $\X_{k \in\pa(i)}$ of the source $\X_i$, so this path can be closed by the bottleneck $\Z_{(k, i)}$. Looping over all targets along the causal order ensures that this bottleneck has been estimated before estimating $\Z_{(i, j)}$. The inner loop over potential source nodes of a target $\X_j$ occurs along the reverse causal order of the parents of $\X_j$ to ensure that any frontdoor paths can also be closed. All frontdoor paths between source $\X_i$ and target $\X_j$ include a parent $\X_{\ell \in \pa(j)}$ of the target. Beginning with the parent \emph{latest} in the causal order as the candidate source node ensures that there is no other potential frontdoor path open between source and target: if there were, this parent $\X_{\ell}$ would have been later in the causal order. Iterating backwards along the causal order ensures that any potential frontdoor paths can be closed using the estimated bottleneck from a previous step.

See \Cref{alg:estim} for pseudocode describing how we estimate all bottleneck (and mechanism) functions of a given SCBM. Notice that the estimate of a bottleneck variable is straightforwardly given by applying the estimated bottleneck function: $\hat{\Z}_{(i,j)} := \hat{b}_{(i,j)}(\X_i)$.

\begin{algorithm}[!h]
  \caption{Estimate Bottleneck \& Effect Functions}
  \label{alg:estim}
  \begin{algorithmic}
  \State {\bfseries Input:} SCBM $\mathfrak{C} = \langle \cG = (\mathcal{V}, \mathcal{E}), \cX, \cZ, \cH,\boldsymbol{\eta}, \mathcal{B},\mathcal{F}, \X \rangle$, causal order $C[1..|\mathcal{V}|]$, assumed bottleneck dimension $d_{\hat{\Z}}$
  \State
  \For{$j$ {\bfseries in} $C$} \Comment{Outer loop over target nodes.}
  \State $target \gets \X_j$
  \If{$target$ is root node}
  \State pass \Comment{Root nodes have no bottlenecks.}
  \Else
  \For{$i$ {\bfseries in} reversed($C[:j]$)} \Comment{Inner loop over source nodes.}
  \State $source \gets \X_i$
  \State $\hat{\Z}_{cond} \gets GetCondSet(\mathfrak{C}, i, j)$ \Comment{According to procedure described in Appendix \ref{app:ssec:estim}.}
  \State $\hat{m}_{(i,j)} \gets fit(source, target, \hat{\Z}_{cond})$ \Comment{Using linear or nonlinear regressor.}
  \State $\hat{b}_{(i,j)}, \hat{f}_{(i,j)} \gets factorize(\hat{m}_{(i,j)}, d_{\hat{\Z}})$
  \State Cache $\hat{b}_{(i,j)}, \hat{f}_{(i,j)}$
  \EndFor
  \EndIf
  \EndFor
  \end{algorithmic}
\end{algorithm}

\subsection{Nonlinear SCBMS: Data Generation Details \& Hyperparameters}\label{app:ssec:hyper}

\subsubsection{Identifiability}\label{app:ssec:id}

To sample nonlinear bottleneck and effect functions, we randomly initialize MLPs with two hidden layers with $d_{\textrm{hidden}} = d_{\X}$ and orthogonalize as described in \Cref{app:sec:mlps}. We use the swish nonlinearity. Hyperparameters of the estimator are described in \Cref{tab:id_hyper}.

\subsubsection{Misspecification}
The data generation process and hyperparameters for the misspecification experiments follow those of the identifiability experiments described above.

\begin{table}[!h]
    \centering
    \caption{Hyperparameters for identifiability and misspecification experiments.}
    \begin{sc}
    \begin{tabular}{lc}
    \toprule
        Hyperparameter & Value \\
        \midrule
        Encoder hidden layer sizes & $[256, 128, 64, 64, 32]$\\
        Decoder hidden layer sizes & $[32, 64, 64, 128, 256]$\\
        Epochs & 500\\
        Batch size & 1024\\
        Optimzer & AdamW \textnormal{\citep{loshchilovdecoupled}}\\
        Learning rate scheduler & Cosine decay w/ warmup\\
        Max. learning rate & $1\mathrm{e}{-5}$\\
        Min. learning rate & $1\mathrm{e}{-7}$\\
        Warmup steps & 1000\\
        Decay steps & 10000\\
        \bottomrule
    \end{tabular}
    \end{sc}
    \label{tab:id_hyper}
\end{table}

\subsubsection{Transfer Learning}
For the transfer learning experiment we sample nonlinear functions by randomly initialized MLPs with four hidden layers and $d_{\textrm{hidden}} = d_{\X}$, using ReLU activation functions. We opt for this approach as opposed to the architecture used in the previous experiments as initializing MLPs with an orthogonalization step results in more complex maps that require significantly more training samples to learn (see \Cref{app:sec:mlps} for a detailed discussion). The advantage of bottleneck variables as conditioning sets in transfer learning settings occur in low sample regimes, but for these more complex functions, irrespective of conditioning choice, performance in the low sample regime was too poor to draw a meaningful comparison. Hyperparameters for these experiments are provided in \Cref{tab:tf_hyper}.

\begin{table}[!h]
    \centering
    \caption{Hyperparameters for the transfer learning experiment.}
    \begin{sc}
    \begin{tabular}{lc}
    \toprule
        Hyperparameter & Value \\
        \midrule
        Encoder hidden layer sizes & $[128, 128, 128, 128, 128, 128]$\\
        Decoder hidden layer sizes & $[128, 128, 128, 128, 128, 128]$\\
        Epochs & 500\\
        Batch size & 512\\
        Optimzer & AdamW \textnormal{\citep{loshchilovdecoupled}}\\
        Learning rate scheduler & -\\
        Learning rate & $5\mathrm{e}{-6}$\\
        \bottomrule
    \end{tabular}
    \end{sc}
    \label{tab:tf_hyper}
\end{table}

\section{On Randomly Sampling Nonlinear Functions}\label{app:sec:mlps}
While visualizing ground-truth bottleneck spaces in nonlinear SCBMs, we stumbled across an unexpected phenomenon: randomly sampled ground-truth bottleneck spaces $\mathcal{Z}$ seemed to collapse to a one-dimensional subspace. Recalling that bottleneck functions map from observed to bottleneck spaces, i.e., $b_{{(i,j)}}: \cX_i \rightarrow \cZ_{(i, j)}$, we can see how this phenomenon has its explanation in the choice of ground-truth bottleneck functions. Following common practice, we sample these nonlinear bottleneck functions by randomly initializing MLPs.

A nonlinear function $m_{(i,j)}: \mathbb R^{d_i} \rightarrow \mathbb R^{d_j}$ is implemented by an MLP with input dimension $d_i$ and output dimension $d_j$. Such an MLP is nothing more than the product of randomly sampled matrices---all layers with randomly sampled weights---composed with element-wise nonlinearities. Let us disregard the nonlinearities for now and take a closer look at products of random matrices, which we write as
\begin{align*}
    M = \prod^{h}_{i =1} W_i,
\end{align*}
where, for simplicity, we assume all factors $W_i$ to have compatible dimensions and full column rank. The number of factors $h$ can be understood as the number of hidden layers in the corresponding MLP. 

Let $r = \min_{i \in \{1..h\}} \textrm{rank}(W_i)$ denote the smallest rank of any factor in this product. We know that for the product it holds that $r = \textrm{rank}(M)$. However, empirically, we observe that the numerical rank of $M$ quickly approaches $1$. Quickly, in this context, means that this behaviour is observed for as few as three factors. This phenomenon also seems robust to the base distribution from which entries are chosen, as well as the size of the factor matrices. Understanding and characterizing this behaviour is active work in progress in an orthogonal research project.

While we cannot yet fully explain the origins of this collapse, we do propose an effective remedy in practice. When sampling the random matrices corresponding to individual matrices, we append an orthogonalization step. This is achieved by taking the $QR$-decomposition of the factor $W_i$ and taking $Q$ as the layer matrix. We thereby guarantee that we have the same span as the originally sampled $W_i$, but with an orthogonal basis. Empirically, this results in much more numerically stable full-rank products.

The technical difference between our proposed, new approach and the original one (without orthogonalization) is that we have proper control of the dimension of the space to which our MLP maps. Neither of the two is per se right or wrong, but their difference is worth being aware of. Rather, we believe our findings raise the question of how to ``properly" sample nonlinear functions in the first place. In our experiments, we see that the orthogonalization approach leads to significantly more complex functions (requiring more samples and resulting in slightly lower identifiability metrics), but how representative such synthetic functions are of practical problems remains an open question.

\end{document}